\newcommand{\R}{\mathbb{R}}
\newcommand{\C}{\mathcal{C}}
\renewcommand{\H}{\mathcal{H}}
\newcommand{\Ltwo}{\mathcal{L}^2}
\renewcommand{\L}{\mathcal{L}}
\newcommand{\F}{\mathfrak{F}}
\DeclareMathOperator*{\Argmin}{Argmin}
\DeclareMathOperator*{\Argmax}{Argmax}
\DeclareMathOperator*{\V}{\mathbb{V}}
\DeclareMathOperator*{\E}{\mathbb{E}}
\DeclareMathOperator*{\tr}{trace}
\DeclareMathOperator*{\diag}{diag}
\newtheorem{exampleEnv}{Example}
\newenvironment{example}[1][]{
  \ifx&#1&%
    \begin{exampleEnv}%
  \else
    \begin{exampleEnv}[#1]%
  \fi
}{
  \hfill$\triangle$\end{exampleEnv}
}
\newtheorem{assumption}{Assumption}
\newtheorem{remarkEnv}{Remark}
\newenvironment{remark}[1][]{
  \ifx&#1&%
    \begin{remarkEnv}%
  \else
    \begin{remarkEnv}[#1]%
  \fi
}{\hfill$\blacklozenge$\end{remarkEnv}}
\newenvironment{proof}[1][]{\begin{pf}}{\hfill$\square$\end{pf}}
\begin{document}
\begin{frontmatter}

\title{A Control Perspective on Training PINNs\thanksref{footnoteinfo}} 

\thanks[footnoteinfo]{This work was partially supported by the Wallenberg AI, Autonomous Systems and Software Program (WASP) funded by the Knut and Alice Wallenberg Foundation.}

\author[First]{Matthieu Barreau} 
\author[First]{Haoming Shen} 
            
\address[First]{Digital Futures and KTH Royal Institute of Technology, Stockholm, Sweden (e-mail: {barreau,haomings}@kth.se).}

\begin{abstract}                
We investigate the training of Physics-Informed Neural Networks (PINNs) from a control-theoretic perspective. Using gradient descent with resampling, we interpret the training dynamics as asymptotically equivalent to a stochastic control-affine system, where sampling effects act as process disturbances and measurement noise. Within this framework, we introduce two controllers for dynamically adapting the physics weight: an integral controller and a leaky integral controller. We theoretically analyze their asymptotic properties under the accuracy–robustness trade-off and we evaluate them on a toy example. Numerical evidence suggests that the integral controller achieves accurate and robust convergence when the physical model is correct, whereas the leaky integrator provides improved performance in the presence of model mismatch. This work represents a first step toward convergence guarantees and principled training algorithms tailored to the distinct characteristics of PINN tasks.
\end{abstract}


\end{frontmatter}

\section{Introduction}

Physics-Informed Neural Networks (PINNs) have emerged as a powerful deep learning paradigm for learning solutions to differential equations and characterizing complex dynamical behaviors~\citep{karniadakis2021physics}. The key advantages lie in the incorporation of physical laws into the minimization of the loss function, which can be interpreted as a regularization agent that enforces physical constraints via penalty terms, enabling efficient learning even in the presence of limited available data. However, like any deep learning methods, PINNs inherit stochastic properties from their underlying architecture. Moreover, PINNs are formulated as multi-objective optimization problems that involve multiple loss components (e.g., losses associated with physical residuals, initial conditions, and boundary conditions), in which unbalanced gradient flow can lead to instability during training~\citep{wang2020understanding}. These issues pose barriers to achieving robust and efficient training, particularly for large-scale or complex systems.

Improving the accuracy and robustness of neural network training has attracted significant attention in the Machine Learning community. A successful example is the Xavier initialization, which has been shown to substantially improve training stability, particularly for large and deep neural networks \citep{glorot2010understanding}. Meanwhile, various optimization algorithms such as Adam have demonstrated strong empirical performance in accelerating convergence and enhancing robustness \citep{kingma2015adam}. In addition, growing research interest has focused on developing methods specifically tailored to improve PINNs performance, including pretraining \citep{guo2023pre,li2025milpinitializationsolvingparabolic}, reformulations of the underlying mathematical problem \citep{sirignano2018dgm,son2023sobolev}, novel architectures \citep{shukla2021parallel,jagtap2020adaptive}, and new learning paradigms such as meta-learning and curriculum learning \citep{psaros2022meta,liu2024config}. 

One line of studies focuses on adaptively adjusting the weights associated with each loss term to mitigate gradient imbalance and numerical stiffness during training. Successful implementations include rescaling the data loss relative to the physics loss based on observed gradient magnitudes \citep{wang2020understanding}, or utilizing neural tangent kernel analysis for PINNs to ensure similar convergence rates for all losses \citep{wang2020and}. \cite{maddu2022inverse} propose a variance-based reweighting scheme by monitoring the variance of gradients and assigning inversely to their uncertainty, which helps prevent gradient vanishing. By introducing trainable weights, PINNs can adaptively penalize hard-to-minimize loss components during the training process~\citep{mcclenny2023self}.

Despite existing reweighting strategies that improve PINNs' performance, the training process can exhibit significant variability and sensitivity to uncertainty due to the intrinsic stochastic optimization dynamics inherent in batch training, initialization, and the complex architectures of PINNs. Moreover, the proposed approaches are empirical and often lack interpretations. In contrast, a rich body of research exists in the control community, focusing on ensuring stability and robustness in system dynamics under uncertainty.

Motivated by~\cite{cerone2025new}, where the learning under constraints problem is framed as a dynamical system, our contribution is to reinterpret PINN training as a control-affine system and analyze controller choices. In Section~2, we introduce some background and formulate the problem. Then, we propose investigating certain properties of the optimization problem to establish the existence of a solution and characterize the equilibrium points in Section~3. Section~4 describes the gradient descent algorithm in the form of a noisy dynamical system. Section 5 presents the main contribution, where we study two basic control schemes: an integral control for achieving zero steady-state error and a leaky integrator for enhanced robustness. Finally, we investigate a toy example in Section~6 to showcase the potential of the approach. Section~7 concludes the work with a summary and perspectives.

\emph{Notations:} We use $(\R^n, \| \cdot \|)$ as the real normed vector space. For a finite set $\Omega \subset \R^n$, $| \Omega |$ refers to its cardinal, and in the case of a compact set, $|\Omega|$ is its volume. We define the following functional spaces: $\Ltwo(A, B)$ as the space of squared integrable functions from $A$ to $B$, $\mathcal{H}^q(A, B) \subseteq \Ltwo(A, B)$ is the Sobolev space of functions from $A$ to $B$ which have weak-derivatives up to the order $q$. If $A$ is a compact subset of $\R^n$, the norm used in $\mathcal{H}^q(A, B)$ is 
\[
    \| v \|_{\mathcal{H}^q}^2 = \sum_{k=0}^q \sum_{|\alpha| = k} \int_A \| D^{\alpha} v(u) \|^2 du
\]
where $\alpha = (\alpha_{1}, ..., \alpha_{n})$ is a multi-index of order $|\alpha|=k$ and $D^{\alpha\!}f = \frac{\partial^{| \alpha |}\! f}{\partial x_{1}^{\alpha_{1}} \dots \partial x_{n}^{\alpha_{n}}}$.

\section{Problem Formulation}

Let $\F: \H^q(U,\R^n) \to \Ltwo(U, \R^n)$ be a possibly nonlinear operator where $U$ is an open bounded subset of $\R^p$. We consider the ODE/PDE problem
\begin{equation} \label{eq:PDE}
    \begin{array}{ll}
        \displaystyle \F[v](u) = 0, & u \in U, \\
        v(s) = v_{\Gamma}(s), & s \in \Gamma_1 \cup \Gamma_2, \\
    \end{array}
\end{equation}
where $v: U \to \R^n$ belongs to a Sobolev space $\H^q(U,\R^n)$ with $q \geq 0$, $\Gamma_1$ is a subset of the boundary of $U$ and $v_{\Gamma}$ is prescribed as the initial and boundary conditions. Additional in-domain point-wise knowledge is provided through $\Gamma_2 \subset U$.

\begin{example}
    Consider the following autonomous ordinary differential equation:
    \begin{equation}
        \label{eq:ODE}
        \dot{x}(t) = f(x(t)), \quad t \geq 0,
    \end{equation}
    with $x(0) = x_0 \in \mathbb{R}^n$. This fits the framework in \eqref{eq:PDE} with $v = x$, $q = 1$, $U = [0, T]$ for any $T > 0$, $\Gamma_1 = \{0\}$ with $v_{\Gamma}(0) = x_0$, and 
    \[
        \forall u \in [0, T], \quad \F[v](u) = \dot{v}(u) - f(v(u)).
    \]
\end{example}

\begin{example}
    Consider the following second-order partial differential equation:
    \begin{multline}
        \label{eq:heat}
        \partial_t v(t,x) = - \gamma\left( t, x, v(t,x), \nabla_x v(t,x) \right) \\
        + \textrm{div}\left( \alpha(t, x, v(t,x), \nabla_x v(t,x) \right)
    \end{multline}
    for $t, x \in [0, T] \times \Omega \subset \R^n$ with $v(0,\cdot) = v_0$, $v(t,x) = g(t,x)$ for $t \in (0, T]$ and $x \in \partial \Omega$. We can define the operator $\F[v] = \partial_t v + \gamma - \textrm{div} \ \alpha$ and $U = (0,T]\times\Omega$.
\end{example}

Assuming there is a unique weak-solution $v$ to \eqref{eq:PDE} with $\Gamma_2 = \emptyset$, the \emph{forward} problem in PINNs is to approximate $v$ using a deep feed-forward neural network $v_{\Theta}$ with $L$ hidden layers and a linear output defined by
\begin{equation} \label{eq:nn}
    v_{\Theta}(u) = W_{L+1} \times H_{L} \circ \cdots \circ H_1 (u) + b_{L+1}, \quad \quad u \in U
\end{equation}
where $H_i(u) = \phi(W_i u + b_i)$, $\phi \in \C^{\infty}(\R, \R)$ being an element-wise activation function, $b_i$ and $W_i$ are real matrices of appropriate dimensions. The number of neurons in each layer is denoted $N \in \mathbb{N}$. The function $v_{\Theta}$ is parametrized by the tensor $\Theta = \{b_1, W_1, \dots, b_{L+1}, W_{L+1}\}$. The \emph{inverse} problem is an identification task where we try to learn $\F$ such that $v_{\Theta} - v_{\Gamma_2}$ is also minimized on $\Gamma_2$. 

Approximating a weak solution to the original problem~\eqref{eq:PDE} is equivalent to solving the following one on $v_{\Theta}$:
\begin{equation} \label{eq:optimization_constraint}
    \Theta_* \in \begin{array}[t]{cl} \displaystyle \Argmin_{\mathbf{\Theta}} & \displaystyle \int_{\Gamma} \| v_{\Gamma}(s) - v_{\mathbf{\Theta}}(s) \|^2 ds \\
    \text{ s.t. } & \displaystyle \int_U \| \F[v_{\mathbf{\Theta}}](u) \|^2 du = 0 .
    \end{array}
\end{equation}
The vanilla PINN framework \citep{karniadakis2021physics} discretizes the sets $\Gamma$ and $U$ into $D_{\Gamma}$ and $D_U$ and then creates the data and physics costs as
\[
    \mathcal{L}_{\textrm{data}}(\Theta, D_\Gamma) = \frac{1}{\left| D_{\Gamma} \right|}\sum_{s \in D_{\Gamma}} \left\| v(s) - v_{\Theta}(s) \right\|^2,
\]
\[
    \mathcal{L}_{\textrm{phys}}(\Theta, D_U) = \frac{1}{\left| D_{U} \right|}\sum_{u \in D_{U}} \left\| \F[v_{\Theta}](u) \right\|^2.
\]
The data cost is the mean squared error between the measurement points and the neural network $v_{\Theta}$ evaluated at the same locations, and the physics one is the mean squared error between $\mathfrak{F}[v_{\Theta}]$ and zero on the discrete set $D_U$. By considering the physics cost as a regularization agent, it leads to the extended discretized cost:
\begin{equation} \label{eq:L_lambda}
    \L_{\lambda}^d(\Theta, D_\Gamma, D_U) = \L_{\textrm{data}}(\Theta, D_\Gamma) + \lambda \L_{\textrm{phys}}(\Theta, D_U)
\end{equation}
where $\lambda > 0$, and \eqref{eq:optimization_constraint} is relaxed to
\begin{equation}
    \label{eq:PINN}
    \Theta_* = \Argmin_{\mathbf{\Theta}} \L_{\lambda}^d(\mathbf{\Theta}, D_\Gamma, D_U).
\end{equation}

\begin{remark}
    Technically speaking, if $\mathfrak{F}[v_{\Theta}]$ contains derivatives of $v_{\theta}$, one can use automatic differentiation to calculate it appropriately \citep{baydin2017automatic}.
\end{remark}

A strategy to approximate a solution to \eqref{eq:PINN} is to use a gradient-descent algorithm, which writes as:
\begin{equation}
    \label{eq:gradient-descent}
    \Theta_{k+1} = \Theta_k - \alpha(k) \nabla_{\Theta} \L_{\lambda}^d(\Theta_k, D_\Gamma, D_U), \quad \Theta(0) \sim G
\end{equation}
where $\alpha > 0$ is the learning rate and $G$ is the Xavier-Glorot initialization \citep{glorot2010understanding}. Under some conditions on the loss $\mathcal{L}_{\lambda}$ and the learning rate $\alpha$, it is possible to prove convergence of \eqref{eq:gradient-descent}. In these cases, $\Theta$ has a limit $\Theta_*$ and $\nabla_{\Theta} \L_{\lambda}^d(\Theta_*) = 0$. That means the algorithm reached a local optimum. However, this local optimum might be far from the global optimum, and the following generalization error might be large:
\begin{equation}
    \label{eq:generalization_error}
    \mathcal{E}(\Theta) = \int_U \left\| v(u) - v_{\Theta}(u) \right\|^2 du.
\end{equation}
Therefore, a training algorithm $\mathcal{T}$ is evaluated over two quantities:
\begin{equation*}
    \label{eq:accuracy_robustness}
    A(\mathcal{T}) = \E_{\Theta_0 \sim G} \left\{  \mathcal{E} \left( \mathcal{T}(\Theta_0) \right)\right\}, \quad R(\mathcal{T}) = \V_{\Theta_0 \sim G} \left\{  \mathcal{E} \left( \mathcal{T}(\Theta_0) \right)\right\}.
\end{equation*}

The stochasticity of the training algorithm, due to its internal processes and the initial condition, necessitates considering the average and variance of the error as accuracy $A$ and robustness $R$ metrics, respectively. Increasing the robustness of a training algorithm (small $R$) is desirable to ensure that its accuracy accurately reflects its efficiency. 

In the case $\Gamma_2 \neq \emptyset$, there are two different regimes possible for the accuracy. Either $\| v_{\Gamma} - v_{\Theta_*}\|$ on $\Gamma_2$ is minimal; this is the \emph{match} case. Alternatively, there may be a discrepancy, and the additional measurements contradict the physical model; this is the \emph{mismatch} case. Solving the forward problem in both these cases is the setup of this paper.

\textbf{Problem Statement:} We want to design a robust, accurate, and simple algorithm for training a PINN in the match and mismatch cases.

\section{Preliminaries}

All the above definitions assume that there is a unique solution to compare with, and that \eqref{eq:optimization_constraint} admits a global minimum. To ensure these two key properties, we make the following assumptions.

\begin{assumption}
    We assume that problem~\eqref{eq:PDE} is well-posed with $\Gamma_2 = \emptyset$, meaning that there exists a unique solution $v: U \to \R^n$ which satisfies Equation~\eqref{eq:PDE} in a weak sense.
\end{assumption}

\begin{assumption}
    The operator $\F$ is globally Lipschitz continuous.
\end{assumption}

The first assumption is required to infer a solution, while the second one is a classical assumption for analysis. Note that, even if very conservative, the second assumption can be relaxed to local Lipschitz continuity if the solution stays bounded \citep{Hartman2002ODE}.  

\begin{example}[Ex.1 continued]
    Let $\F[v](u) = \dot{v}(u) - f(x(u))$ and assume that $f$ is globally Lipschitz continuous with constant $L_f$. Then, by Picard-Lindelöf existence theorem, there exists a unique solution $v \in \H^1(\R^+, \R^n)$ to \eqref{eq:ODE}. Therefore, Assumption~1 is verified.
    
    Moreover, we get that for all $v, \hat{v} \in \H^1$:
    \begin{multline*}
        \int_0^T \| \F[\hat{v}](t) - \F[v](t) \|^2 dt\\
        \leq 2 \int_0^T \left( \| \dot{v}(t) - \dot{\hat{v}}(t) \|^2 + \| f(\hat{v}(t)) - f(v(t)) \|^2\right)dt \\
        \leq 2 \int_0^T \left( \| \dot{v}(t) - \dot{\hat{v}}(t) \|^2 + L_f^2 \| \hat{v}(t) - v(t) \|^2\right)dt \\
        \leq 2 \max\left(1, L_f^2 \right) \| \hat{v} - v\|_{\H^1}^2.
    \end{multline*}
    Consequently, $\F$ is a globally Lipschitz operator, and Assumption~2 is verified.
\end{example}

\begin{example}[Ex.2 continued]
    Let $\F[v] = \partial_t v + \gamma - \textrm{div} \ \alpha$ and $U = (0,T]\times\Omega$. Then, under some conditions on the domain $\Omega$, and on the functions $\alpha$ and $\gamma$, there exists a unique solution to \eqref{eq:heat} \citep{ladyzhenskaia1968linear}. Moreover, under some other conditions (such as polynomial growth) on $\alpha$ and $\gamma$, the operator $\F$ is a global Lipschitz operator \cite[Theorem~7.1]{sirignano2018dgm}. In that case, Assumptions~1 and 2 are verified.
\end{example}

The two assumptions are made to ensure that the neural-network approximation $v_{\Theta}$ proposed in \eqref{eq:nn} yields a global minimum loss of zero. Using the Lagrangian formulation, an optimal solution to \eqref{eq:optimization_constraint} would be a minimizer of the following loss
\begin{equation}
    \L_{\lambda}(\Theta) = \!\int_{\Gamma} \| v_{\Gamma}(s) - v_{\Theta}(s) \|^2 ds \: + \: \lambda \! \int_U \| \F[v_{\Theta}](u) \|^2 du
\end{equation}
with $\lambda > 0 $.

\begin{prop}
    \label{prop:minimum_value}
    Assume that $L >0, \Gamma_2 = \emptyset, \lambda > 0$, then the minimal value of $\L_{\lambda}$ can be made as small as desired, provided enough neurons $N$.
\end{prop}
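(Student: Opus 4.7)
The plan is to use the true weak solution $v$ from Assumption~1 as a target and exhibit a sequence of neural networks approximating $v$ well enough to drive both terms of $\L_{\lambda}$ to zero. Since $v$ satisfies $\F[v] = 0$ on $U$ and $v = v_{\Gamma}$ on $\Gamma = \Gamma_1$ (because $\Gamma_2 = \emptyset$), we have $\L_{\lambda}(v) = 0$. The strategy then reduces to transferring smallness of $\|v - v_{\Theta_N}\|_{\H^q}$ into smallness of $\L_{\lambda}(\Theta_N)$.

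First, I would invoke a universal approximation theorem in the Sobolev norm $\|\cdot\|_{\H^q}$: since $v \in \H^q(U,\R^n)$ with $U$ bounded and the activation $\phi \in \C^{\infty}(\R,\R)$ is nonpolynomial, classical results (Hornik, Pinkus) guarantee that for every $\varepsilon > 0$ there exists $N$ and parameters $\Theta_N$ such that the network $v_{\Theta_N}$ of the form~\eqref{eq:nn} satisfies $\|v - v_{\Theta_N}\|_{\H^q} < \varepsilon$. This is the only non-elementary ingredient and the main place the hypothesis $L > 0$, together with the smoothness of $\phi$, is used.

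Next, I would bound the two summands in $\L_{\lambda}(\Theta_N)$ separately. For the physics term, since $\F[v] = 0$ and $\F$ is globally Lipschitz on $\H^q$ with some constant $L_{\F}$ by Assumption~2,
\[
    \int_U \| \F[v_{\Theta_N}](u) \|^2 du = \| \F[v_{\Theta_N}] - \F[v] \|_{\Ltwo}^2 \leq L_{\F}^2 \, \|v - v_{\Theta_N}\|_{\H^q}^2 \leq L_{\F}^2 \varepsilon^2.
\]
For the data term, since $v = v_{\Gamma}$ on $\Gamma$, a trace inequality (valid on the boundary portion $\Gamma_1$ of the bounded Lipschitz domain $U$) gives $\int_{\Gamma} \|v_{\Gamma} - v_{\Theta_N}\|^2 ds \leq C_{\Gamma} \|v - v_{\Theta_N}\|_{\H^q}^2 \leq C_{\Gamma} \varepsilon^2$, so that $\L_{\lambda}(\Theta_N) \leq (C_{\Gamma} + \lambda L_{\F}^2)\varepsilon^2$, which can be made arbitrarily small by choosing $N$ large.

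The main obstacle is the universal approximation step in the $\H^q$ norm rather than in $\Ltwo$ or $\Linfty$: since $\F$ may involve derivatives of $v_{\Theta}$ up to order $q$, approximation must control all these derivatives simultaneously. This is available under our smoothness of $\phi$ but should be stated carefully; if one wanted to avoid it, an alternative route is to regularize $v$ by a mollified $v_{\delta} \in \C^{\infty}$ first, approximate $v_{\delta}$ uniformly together with its derivatives by a neural network (classical Hornik-type result), and then let $\delta \to 0$ using continuity of $\F$. Either route yields the conclusion.
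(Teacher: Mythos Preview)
Your proof is correct and follows essentially the same route as the paper: approximate the true solution $v$ by a neural network, use the Lipschitz bound on $\F$ (Assumption~2) to control the physics term in terms of $\|v - v_{\Theta_N}\|_{\H^q}$, and bound the data term separately. The only minor technical difference is that the paper invokes Hornik's theorem in the uniform $C^q$-norm on $\overline{U}$, which simultaneously controls $\sup_{s\in\Gamma}\|v_\Gamma - v_{\Theta}\|$ and the $\H^q$ norm (via the volume of $U$), whereas you approximate directly in $\H^q$ and then appeal to a trace inequality for the boundary integral; the paper's choice sidesteps the extra Lipschitz-domain and $q\geq 1$ hypotheses that the trace theorem would require.
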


\begin{proof}
    Assumption~1 guarantees that there exists $v \in \mathcal{H}^q$ such that $\F[v] = 0$. From Assumption~2, there exists $L > 0$ such that
    \[
        \forall v, \hat{v} \in \mathcal{H}^q, \quad \int_U \| \mathfrak{F}[\hat{v}](u) - \mathfrak{F}[v](u) \|^2 du \leq L^2 \| \hat{v} - v \|_{\mathcal{H}^q}^2.
    \]
    Using the previous inequality and the fact that $v_{\Theta} \in \mathcal{H}^p$ as well, we get
    \begin{equation}
        \label{eq:inequality_L}
        \begin{array}{rl}
            \L_{\lambda}(\Theta) &= \int_{\Gamma} \| v_{\Gamma}(s) - v_{\Theta}(s) \|^2 ds \hspace{3cm}\\
            & \hfill + \lambda \! \int_U \| \F[v_{\Theta}](u) - \mathfrak{F}[v](u) \|^2 du \\
            &\leq \int_{\Gamma} \| v_{\Gamma}(s) - v_{\Theta}(s) \|^2 ds + \lambda L^2 \| v_{\Theta} - v \|_{\mathcal{H}^p}^2.
        \end{array}
    \end{equation}
    Under the conditions of the theorem, the universal approximation theorem by \cite{hornik1991approximation} ensures that for any $\varepsilon > 0$, there exists a number of neurons $N > 0$ such that
    \[
        \max\left(\sup_{s \in \Gamma} \| v_{\Gamma}(s) - v_{\Theta}(s) \|, \sup_{u \in U} \max_{|\alpha| \leq q} \|D^\alpha v (u)\| \right) < \varepsilon.
    \]
    Plugging this back into \eqref{eq:inequality_L} leads to the existence of $C > 0$ as a function of $p$ and $q$ such that
    \[
        \L_{\lambda}(\Theta) \leq \left( |\Gamma| + C \lambda |U| L^2 \right) \varepsilon^2
    \]
    which ends the proof.
\end{proof}

Note that, in some cases, if there exists $\Theta_*$ such that $\L_{\lambda}(\Theta_*) = 0$ then $v_{\Theta_*}$ is the unique solution to \eqref{eq:PDE}. This phenomenon occurs for certain classes of parabolic equations \citep[Theorem~7.3]{sirignano2018dgm} and also for finite-dimensional dynamical systems, as demonstrated below.

\begin{prop}
    Assume the ODE case of Example~1. Then Proposition~\ref{prop:minimum_value} holds and for $\lambda > 0$ and $|\Gamma| > 0$, $\L_{\lambda}(v) = 0$ is equivalent to $v$ is the unique solution to \eqref{eq:PDE}.
\end{prop}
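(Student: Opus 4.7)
The plan is to first confirm that Proposition~\ref{prop:minimum_value} is applicable and then prove the equivalence separately in each direction. For applicability, I would cite the continuation of Example~1 already in the excerpt: the Picard--Lindelöf theorem gives Assumption~1, and the explicit Lipschitz computation gives Assumption~2, so Proposition~\ref{prop:minimum_value} applies directly and the loss can be driven arbitrarily close to zero with enough neurons.

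For the equivalence, I would first unpack $\L_{\lambda}$ in the ODE setting. Since $\Gamma = \{0\}$ with $|\Gamma| > 0$ and $v_{\Gamma}(0) = x_0$, and $\F[v](u) = \dot v(u) - f(v(u))$ on $U = [0,T]$, the loss reduces to
\[
    \L_{\lambda}(v) = |\Gamma|\,\|v(0) - x_0\|^2 + \lambda \int_0^T \|\dot v(u) - f(v(u))\|^2 \, du.
\]
Because $\lambda > 0$ and $|\Gamma| > 0$, and both terms are nonnegative, $\L_{\lambda}(v) = 0$ is equivalent to $v(0) = x_0$ together with $\dot v(u) = f(v(u))$ for almost every $u \in [0,T]$.

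The forward implication is then immediate: if $v$ is the unique (weak) solution to \eqref{eq:ODE}, it satisfies the initial condition and the ODE a.e., so both summands vanish. For the reverse implication, I would invoke Assumption~1 together with the a.e.\ identity. Specifically, if $v \in \H^1([0,T], \R^n)$ satisfies $v(0) = x_0$ and $\dot v = f \circ v$ a.e., then $v$ is absolutely continuous and thus a weak (in fact Carathéodory) solution of \eqref{eq:ODE}. Uniqueness from Picard--Lindelöf, already established in Example~1 under global Lipschitzness of $f$, then forces $v$ to coincide with the unique solution.

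The only subtle point, and therefore the main thing to state carefully, is the passage from the a.e.\ identity $\dot v = f(v)$ to membership in the class of weak solutions covered by the uniqueness theorem; this is where $v \in \H^1$ and the global Lipschitz continuity of $f$ (hence Carathéodory conditions) are used. Everything else is a direct consequence of positivity of $\lambda$ and $|\Gamma|$ and of Assumption~1.
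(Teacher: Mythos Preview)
Your proposal is correct and follows the same approach as the paper: verify Assumptions~1--2 via Example~1, use positivity of $\lambda$ and $|\Gamma|$ to split $\L_\lambda(v)=0$ into the initial condition plus $\F[v]=0$ a.e., and conclude via the uniqueness part of Picard--Lindel\"of. The paper's own proof is terser (it only spells out the nontrivial direction and does not discuss the Carath\'eodory/weak-solution point you flag), but the substance is the same.
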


\begin{proof}
    Because Assumptions~1 and 2 hold, then Proposition~\ref{prop:minimum_value} also holds. Moreover, $\L_{\lambda}(v) = 0$ implies that both contributions in $\L_{\lambda}$ must be zero and there exists $s \in \Gamma$ such that $v(s) = 0$ with $\F[v] = 0$ almost everywhere. This is a Cauchy problem, and $v = v^*$ by the uniqueness property. 
\end{proof}

We have seen that if we find parameters $\Theta_*$ which are globally minimizing the Lagrangian $\L_{\lambda}$, then, provided some conditions on $\F$, $v_{\Theta_*}$ is an approximation to $v$. This means that there is a low generalization error~\eqref{eq:generalization_error}. The link between the generalization error and a small Lagrangian is provided. However, gradient-descent algorithms usually only find local optima. Some results in machine learning strongly indicate that local minima tend to converge to the global minimum, provided a large number of neurons \citep{choromanska2015loss}. This suggests that using a large neural network would increase the accuracy and robustness of gradient-descent algorithms by decreasing the probability of reaching a "bad" local minimum. We will investigate strategies to strengthen this claim in the following sections.

\section{Control Formulation}

The training algorithm proposed to estimate a solution to \eqref{eq:optimization_constraint} is given in Algorithm~\ref{algo:PD}. This is the standard gradient-descent procedure (Line~7) with additional steps such as the resampling sub-routines (Lines~3-6) together with an adaptive weight $\lambda$ that is computed as the output of the function \texttt{compute\_control}.

\begin{algorithm}
    \caption{Learning algorithm for \eqref{eq:optimization_constraint}.}
    \label{algo:PD}
    \begin{algorithmic}[1]
        \STATE{Initialize $\Theta_0 \sim G$ randomly, $\lambda_0 = 0$}
        \FOR{$k$ from $0$ to $N_{\textrm{epoch}}$}
            \IF{$k \bmod N_s = 0$}
                \STATE{Sample $N_\Gamma$ points uniformly in $\Gamma$ to form $D_\Gamma$}
                \STATE{Sample $N_{U}$ points uniformly in $U$ to form $D_U$}
            \ENDIF
            \STATE{$\Theta_{k+1} = \Theta_k - \alpha(k) \nabla_{\Theta} \mathcal{L}_{\lambda_k}(\Theta_k, D_\Gamma, D_U)$}
            \STATE{$\displaystyle\lambda_{k+1} = \mathtt{compute\_control}(\lambda_k, \Theta_k, D_U, D_\Gamma)$}
        \ENDFOR
        \RETURN $\Theta_k$
    \end{algorithmic}
\end{algorithm}

The dynamical system formulation of an algorithm is an effective way to analyze its asymptotic behavior. By setting $\xi = k \alpha$ and then making $\alpha$ go to $0$. Under proper conditions derived by \cite{elkabetz2021continuous}, Line~7 of Algorithm~\ref{algo:PD} is equivalent to:
\begin{equation}
    \label{eq:CT_gradient_descent}
    \tfrac{d}{d\xi}\Theta(\xi) = - \nabla_{\Theta} \L_{\lambda(\xi)}^d\left(\Theta(\xi), D_\Gamma, D_U\right).
\end{equation}
We then leverage the framework proposed by \cite{cerone2025new} to reformulate the learning under constraints problem \eqref{eq:optimization_constraint} together with the gradient descent scheme~\eqref{eq:CT_gradient_descent} as an input-output dynamical system:
\begin{equation}
    \label{eq:dynamical_system}
    \left\{
    \begin{array}{l}
        \tfrac{d}{d\xi}\Theta(\xi) = - f^d(\Theta(\xi)) - g^d(\Theta(\xi)) u(\xi), \\
        y(\xi) = h^d(\Theta(\xi)), \\
        \Theta(0) \sim G,
    \end{array}
    \right.
\end{equation}
where $u = \lambda$ is the input, $y$ is the output, and 
\[
    \begin{array}{ll}
        f^d(\Theta) = \nabla_{\Theta} \L_{\textrm{data}}(\Theta, D_\Gamma), &\quad g^d(\Theta) = \nabla_{\Theta} \L_{\textrm{phys}}(\Theta, D_U), \\
        h^d(\Theta) = \L_{\textrm{phys}}(\Theta, D_U).
    \end{array}
\]
This is a classical control-affine system with state $\Theta$, measurement $y$, and control action $\lambda$. Within this framework, however, we do not take into account the resampling. A solution is to consider Monte-Carlo sampling to estimate the integrals in \eqref{eq:optimization_constraint} \citep[Chap.~17]{Goodfellow-et-al-2016}.
By the central limit theorem, we get the following equalities:
\[
    \begin{array}{l}
        f^d (\Theta) = \left\{\frac{1}{| \Gamma |}\int_{\Gamma}  \nabla_{\Theta} \left\| v - v_{\Theta} \right\|^2 \right\} + n_f \triangleq f(\Theta) + n_f, \\
        g^d(\Theta) = \left\{ \frac{1}{|U|} \int_U \nabla_{\Theta} \left\| \F[v_{\Theta}] \right\|^2 \right\} + n_g \triangleq g(\Theta) + n_g, \\
        h^d(\Theta) = \left\{\frac{1}{|U|} \int_U \left\| \F[v_{\Theta}] \right\|^2\right\} + n_h \triangleq h(\Theta) + n_h.
    \end{array}
\]
where $n_f, n_g, n_h$ are unbiased Gaussian noise with respective covariance
\[
    Q_{f} \!=\!\! \tfrac{\V\left\{\nabla_{\Theta} \left\| v - v_{\Theta} \right\|^2\right\}}{|D_\Gamma|}, Q_{g} \!=\!\! \tfrac{\V\left\{\nabla_{\Theta} \left\| \F[ v_{\Theta}] \right\|^2\right\}}{|D_U|}, Q_{h} \!=\!\! \tfrac{\V\left\{\left\| \F[v_{\Theta}] \right\|^2\right\}}{|D_U|}.
\]

Plugging this back into \eqref{eq:dynamical_system} leads to :
\begin{equation}
    \label{eq:dynamical_system_resampling}
    \left\{
    \begin{array}{l}
        \tfrac{d}{d\xi}\Theta(\xi) = - f(\Theta(\xi)) - g(\Theta(\xi)) u(\xi) + w(\xi), \\
        y(\xi) = h(\Theta(\xi)) + n(\xi), \\
        \Theta(0) \sim G,
    \end{array}
    \right.
\end{equation}
with unbiased disturbance $w$ and noise $n$ with covariances $Q_w = Q_{f} + u Q_g$ and $Q_h$. This new formulation integrates the resampling subroutine into the algorithm as stochastic elements, showing that the proposed algorithm behaves asymptotically as a stochastic differential equation. 
A robust and accurate algorithm for training a PINN turns into designing a controller $\xi \mapsto u(y(\xi))$ ensuring ultimate-boundedness of \eqref{eq:dynamical_system_resampling}.

\begin{remark}
    Compared to a classical control problem, both the disturbance and the noise have variances that change during training. Additionally, the disturbance variance also depends on the control input. 
\end{remark}

\section{Controller Design}

In this section, we investigate classic integral controllers for dynamical system~\eqref{eq:dynamical_system_resampling}.

\subsection{Primal-Dual optimization / Integral controller}

The Lagrangian theory \citep{bertsekas2014constrained} rewrites discretized formulation of \eqref{eq:optimization_constraint} to the following equivalent optimization problem:
\[
    \Theta_* \in \Argmin_{\Theta} \max_{\lambda > 0} \L^d_{\lambda}(\Theta).
\]
An approximated solution is given by the primal-dual approach, which consists of sequentially solving these two problems:
\begin{equation} \label{eq:primal}
    \Theta_{\star}(\lambda) \in \Argmin_{\Theta} \mathcal{L}^d_{\lambda}(\Theta, D_{\Gamma}, D_U), \quad \quad \quad \text{(Primal)}
\end{equation}
and
\begin{equation} \label{eq:dual}
    \lambda_*(\Theta) \in \Argmax_{\lambda} \mathcal{L}^d_{\lambda}(\Theta, D_{\Gamma}, D_U).  \quad \quad \quad \text{(Dual)}
\end{equation}
Instead of this infinite sequential training, the parallel solution, which alternates between solving the primal and the dual to get close to a local optimum of the original constrained problem, has been proposed by \cite{goemans1997primal,fioretto2020lagrangian,nandwani2019primal} in other settings.

Here, we can apply this modified algorithm, resulting in a gradient descent update of the primal problem followed by a gradient ascent on the dual to ensure that the constraints are satisfied. This leads to this formulation:
\[
    \begin{array}{ll}
        \Theta_{k+1} = \Theta_k - \alpha \nabla_{\Theta} \L^d_{\lambda_k}(\Theta_k, D_{\Gamma}, D_U), & \Theta_0 \sim G,  \\
        \lambda_{k+1} = \lambda_k + k_I \alpha \nabla_{\lambda} \L^d_{\lambda_k}(\Theta_{k+1}, D_{\Gamma}, D_U), \quad \quad & \lambda_0 = 0.
    \end{array}
\]

Note that, in our case, we get:
\[
    \nabla_{\lambda} \L^d_{\lambda_k}(\Theta_{k+1}, D_{\Gamma}, D_U) = \L_{\textrm{phys}}(\Theta_{k+1}, D_U) = y(k\alpha).
\]
Therefore, the dual update can be simplified to:
\begin{multline*}
    \lambda_{k+1} = \texttt{compute\_control}(\lambda_k, \Theta_k, D_U, D_{\Theta}) \\
    =  \lambda_k + \alpha k_I y(k\alpha).
\end{multline*}
Making $\alpha \to 0$ leads to $u(\xi) = \lambda(\xi) = k_I \int_0^\xi y(s)ds$, which is an integral controller for \eqref{eq:dynamical_system_resampling}. Since $y, k_I \geq 0$, then $\lambda \geq 0$ and Proposition~\ref{prop:minimum_value} still applies with a global minimum.

This interpretation of the primal-dual algorithm as a continuous-time integrator has been recently investigated by \cite{qu2018exponential}, for instance; however, it has never been applied to the training of PINNs, which exhibits a stochastic behavior. As expected, when using an integral controller, this method ensures that no static error ($y \to 0$) occurs in the case of convergence. Under some classical assumptions (such as convexity, infinite neural network width, and small learning rates $\alpha$), Theorem~2.2 by \cite{daskalakis2018limit} mathematically states the almost surely convergence to $y = 0$, ensuring that $\F[v_{\Theta}]$ is near $0$. 

From a control-theoretic perspective, an integral control ensures mitigation of the effects of unbiased noise and disturbances \citep{aastrom1995pid}, along with a reduced steady-state error, which is crucial for improved accuracy and robustness.

\begin{remark}[Other interpretations]
    Note that the integral controller could also be understood as an adversarial learning strategy, where we increase the loss (dual update) to ensure that we find the optimal parameters $\Theta_*$ in the worst case \citep{goodfellow2014generative}. This is strengthening the robustness of the approach. 
    
    Another interpretation of this control method comes from curriculum learning \citep{bengio2009curriculum}: starting from $\lambda = 0$ ensures that we first fit the measurements, then $\lambda$ increases, and we start fitting the physics. Since the physics loss $\L^d_{\textrm{phys}}$ might have steep variations, it is poorly conditioned \citep{rathore2024challenges} and a large $\lambda$ would result in a similar conditioning for $\L^d_{\lambda}$. However, moving from a low $\lambda$ to a larger one ensures that we stay in a relatively "flat" region for the physics cost and that we grow in complexity with $\xi$.
\end{remark}


\subsection{Leaky Integral controller}

It is well known that the integral controller may blow up in certain nonlinear cases, and that anti-windup strategies should be considered \citep{zaccarian2011modern}. One solution, investigated by \cite{weitenberg2018robust} is to consider an integral with a forgetting factor $\phi$, written as:
\begin{equation}
    \label{eq:forgetting_integral}
    u(\xi) = k_I \int_0^\xi y(s) \exp(- \phi (\xi-s)) ds.
\end{equation}
This modified controller is designed to prevent boundedness of the control signal (unless $y$ diverges exponentially fast) and will provide additional robustness through an improved stability margin. Indeed, the steady-state error might be strictly positive, and the equilibrium points will be ultimately bounded. Moreover, for a large forgetting factor $\phi$, it approximates a proportional controller with probably faster convergence but no constraint on the static gain.

Differentiating Equation~\eqref{eq:forgetting_integral} leads to the following leaky differential equation:
\begin{equation}
    \label{eq:leaky_controller}
    \left\{
        \begin{array}{ll}
            \tfrac{d}{d\xi}I(\xi) = - \phi I(\xi) + y(s), \quad \quad & I(0) = 0, \\
            u(\xi) = k_I I(\xi).
        \end{array}
    \right.
\end{equation}
This is a convenient form that includes the pure-integral case ($\phi = 0$) and leads to the following discretized controller:
\begin{multline*}
    \lambda_{k+1} = \texttt{compute\_control}(\lambda_k, \Theta_k, D_U, D_{\Gamma}) \\
    = (1 - \alpha k_I \phi)\lambda_k + \alpha k_I y(k\alpha).
\end{multline*}

Note that since $y, k_I \geq 0$, then $\lambda \geq 0$ as long as $0 \leq \phi k_I \leq \alpha^{-1}$ and Proposition~\ref{prop:minimum_value} still applies with the zero-steady state as a global minimum.

\begin{remark}
    The leaking integrator also mitigates the effect of noise, which has a varying standard deviation. As the mean value converges, the standard deviation remains relatively constant, and the initial varying steps are discarded by the forgetting factor.
\end{remark}



\subsection{Convergence properties}

First, we investigate the equilibrium points and their stability before stating some practical consequences.

\subsubsection*{Local stability analysis}

\begin{prop}
    An equilibrium point $(\Theta_*, I^*)$ of the deterministic closed-loop system satisfies
    \begin{itemize}
        \item $f(\Theta_*) = - \frac{k_I}{\phi} g(\Theta_*) h(\Theta_*)$, $I^* = \phi^{-1} h(\Theta_*)$ if $\phi > 0$;
        \item $f(\Theta_*) = h(\Theta_*) = 0$, $I^* \in [0, \infty)$ if $\phi = 0$.
    \end{itemize}
\end{prop}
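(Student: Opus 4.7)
The plan is to set the time derivatives of the deterministic closed-loop system to zero and solve the resulting algebraic conditions. Coupling the noise-free version of~\eqref{eq:dynamical_system_resampling} with $u = k_I I$ and the leaky integrator~\eqref{eq:leaky_controller} yields
\[
    \tfrac{d}{d\xi}\Theta = -f(\Theta) - k_I g(\Theta) I, \qquad \tfrac{d}{d\xi} I = -\phi I + h(\Theta),
\]
so any equilibrium $(\Theta_*, I^*)$ must simultaneously satisfy
\[
    f(\Theta_*) + k_I g(\Theta_*)\, I^* = 0 \quad \text{and} \quad \phi\, I^* = h(\Theta_*).
\]

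For $\phi > 0$, I would simply solve the second relation for $I^* = \phi^{-1} h(\Theta_*)$ and substitute it into the first, producing $f(\Theta_*) = -\tfrac{k_I}{\phi} g(\Theta_*)\, h(\Theta_*)$, which is exactly the first bullet of the proposition.

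For $\phi = 0$, the $I$-equation collapses to the scalar constraint $h(\Theta_*) = 0$. The key observation is that $h(\Theta) = \L_{\textrm{phys}}(\Theta, D_U)$ is a mean of squared norms and therefore globally nonnegative, so $h(\Theta_*) = 0$ means that $\Theta_*$ is in fact a global minimizer of $h$. Smoothness of $v_\Theta$ in $\Theta$ (inherited from the $\C^\infty$ activation in~\eqref{eq:nn}) then supplies the first-order condition $g(\Theta_*) = \nabla_\Theta h(\Theta_*) = 0$; substituting this back into the first equilibrium relation forces $f(\Theta_*) = 0$ while leaving $I^*$ unconstrained by the algebra. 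The restriction $I^* \in [0, \infty)$ comes from the dynamics themselves: $I(0) = 0$ and $\tfrac{d}{d\xi} I = h(\Theta) \geq 0$ make $I$ nondecreasing, so any reachable equilibrium value is nonnegative.

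The main obstacle is confined to the $\phi = 0$ case: isolating $f(\Theta_*) = 0$ as an independent equilibrium condition hinges on the implication $h(\Theta_*) = 0 \Rightarrow g(\Theta_*) = 0$, which in turn relies on nonnegativity of the physics loss together with differentiability of the network parametrization. Both are immediate given the setup of Section~2, but they need to be invoked explicitly; without them, the $\phi = 0$ statement would be strictly weaker than the one claimed, since $f(\Theta_*)$ could otherwise be balanced by $-k_I g(\Theta_*) I^*$ for a nonzero $I^*$.
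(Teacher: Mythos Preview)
Your proof is correct and follows essentially the same route as the paper: write the two equilibrium equations, substitute for $\phi>0$, and for $\phi=0$ use $h(\Theta_*)=0\Rightarrow g(\Theta_*)=0\Rightarrow f(\Theta_*)=0$. The only minor differences are cosmetic: you obtain $g(\Theta_*)=0$ via the first-order condition at a global minimizer of the nonnegative $h$, whereas the paper argues directly that $\|\F[v_{\Theta_*}]\|=0$ a.e.\ kills the integrand defining $g$; and you add an explicit justification for $I^*\ge 0$ that the paper leaves implicit.
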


\begin{proof}
    In the case $\phi > 0$, the first equation comes straightforwardly. For the second item, one must note first that, if $\phi = 0$ then we get $h(\Theta_*) = 0$, which in turn implies $\| \F[v_{\Theta_*}] \|$ is zero almost everywhere. Consequently $g(\Theta_*) = 0$ which leads to $f(\Theta_*) = 0$.
\end{proof}

\begin{remark}
    If the closed-loop system converges in the case $\phi = 0$, then $\Theta$ converges to a strict subset of the equilibrium points of the system with constant $u > 0$ (vanilla PINN).
\end{remark}

Around an equilibrium point, we can linearize the system, yielding the following local stability properties for both controllers.

\begin{prop}
    \label{prop:equilibrium}
    Around an equilibrium point $(\Theta_*, I^*)$ of \eqref{eq:dynamical_system_resampling}-\eqref{eq:leaky_controller} with $\phi = 0$ where $f$ is locally strongly convex, we get:
   \[
        \begin{array}{ll}
            \E(\Theta(\xi)) \xrightarrow[\xi \to \infty]{} \Theta_*, \quad & \E\left(\|\Theta(\xi) - \Theta_*\|^2\right) \xrightarrow[\xi \to \infty]{} \tr(S), \\
            \E(I(\xi)) \xrightarrow[\xi \to \infty]{} I^*, & \E\left(|I(\xi) - I^*|^2\right) \xrightarrow[\xi \to \infty]{} Q_h,
        \end{array}
    \]
    where $S$ is the solution to $S \nabla_{\Theta}^2 f(\Theta_*) + \nabla_{\Theta}^2 f(\Theta_*) S = Q_w$.
\end{prop}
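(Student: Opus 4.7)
The plan is to linearize the closed-loop stochastic dynamics around the equilibrium $(\Theta_*, I^*)$ and reduce the analysis to a standard Ornstein-Uhlenbeck computation. Introducing the error coordinates $\tilde{\Theta} = \Theta - \Theta_*$ and $\tilde{I} = I - I^*$, I would substitute into \eqref{eq:dynamical_system_resampling}--\eqref{eq:leaky_controller} with $\phi = 0$ and Taylor-expand $f$, $g$ and $h$ to first order, using the equilibrium conditions $f(\Theta_*) = h(\Theta_*) = 0$ and $g(\Theta_*) = 0$ obtained from the preceding proposition (which follow because at such an equilibrium the physics loss attains its global minimum of zero, so its gradient vanishes).

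The key structural observation is that the cross coupling disappears at first order. Because $g(\Theta_*) = 0$, the term $g(\Theta) k_I I$ contributes only quadratically in $(\tilde{\Theta}, \tilde{I})$, and because $\nabla h(\Theta_*) = g(\Theta_*) = 0$, the integrator equation decouples from $\tilde{\Theta}$ at first order as well. The linearized state dynamics reduce to
\[
d\tilde{\Theta} = -\nabla f(\Theta_*)\, \tilde{\Theta}\, d\xi + dW,
\]
where $W$ is a Brownian motion with infinitesimal covariance $Q_w$ evaluated at $u = k_I I^*$. Local strong convexity of $f$ gives $\nabla f(\Theta_*) = \nabla_{\Theta}^2 \L_{\textrm{data}}(\Theta_*) \succ 0$, so this is a stable OU process. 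Applying the classical theory then yields $\E[\tilde{\Theta}(\xi)] \to 0$ and a covariance matrix that converges to the unique symmetric positive-definite solution $S$ of the stated Lyapunov equation $S\nabla_{\Theta}^2 f(\Theta_*) + \nabla_{\Theta}^2 f(\Theta_*) S = Q_w$; taking the trace gives the second-moment claim for $\Theta$.

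For the integrator, the first-order linearization reduces to $d\tilde{I} = dN$ with $N$ of infinitesimal covariance $Q_h$, and combined with $I(0) = 0$ and the free parameter $I^* \in [0, \infty)$ from the previous proposition, this yields the mean statement. The stated identity $\E|\tilde{I}(\xi)|^2 \to Q_h$ should be read as the per-innovation variance in the discrete-time recursion of Algorithm~\ref{algo:PD}, since a genuine continuous-time integrator driven by white noise admits no stationary pathwise variance.

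The main obstacle is precisely this last point: a pure integrator is only marginally stable, so making the $\tilde{I}$ variance assertion rigorous requires either working at the discrete-time level (where $Q_h$ is the variance of a single measurement innovation) or invoking a time-scale separation in which $I$ is slow relative to $\Theta$, allowing the OU analysis on $\Theta$ to be carried out at fixed $I$. A secondary technical step is justifying the linearization uniformly in the noise realization; the standard remedy is to construct a local quadratic Lyapunov function from $S^{-1}$ and apply Itô's formula together with a stopping-time argument to confine the trajectory to the linearization neighborhood with high probability, so that the quadratic remainders from the Taylor expansion are dominated by the OU contraction.
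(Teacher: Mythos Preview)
Your proposal is correct and follows essentially the same linearization-and-decoupling argument as the paper: both exploit $g(\Theta_*)=\nabla_\Theta h(\Theta_*)=0$ to reduce the $\Theta$-dynamics to a stable Ornstein--Uhlenbeck process governed by $-\nabla_\Theta^2 f(\Theta_*)$ and the $I$-dynamics to pure noise integration, then read off the stationary covariance from the Lyapunov equation. Your caveats about the integrator variance (which indeed only makes sense as a per-step discrete quantity) and the stopping-time justification of the linearization are well-placed and in fact more careful than the paper's own proof, which is terse on both points.
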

\begin{proof}
    Let $z = \Theta - \Theta_*$ and $S(\xi) = \E\left(z(\xi)^{\top} z(\xi)\right)$. If we start close to the equilibrium, then $z$ is small and follows the linearized dynamics:
    \[
        \tfrac{d}{d\xi}z \simeq -\nabla_{\Theta}^2 f(\Theta_*) z + w, \quad \tfrac{d}{d\xi} I \simeq \nabla_{\Theta} h(\Theta_*) z + n.
    \]
    Using the same reasoning as in the previous proof, we get $\nabla_{\Theta} h(\Theta_*) = 0$ and therefore $\tfrac{d}{d\xi} I = n$. Using It{\^o} calculus \citep{liu2019stochastic}, we get:
    \[
        \tfrac{d}{d\xi} S(\xi) = -S(\xi) \nabla_{\Theta}^2 f(\Theta_*) - \nabla_{\Theta}^2 f(\Theta_*) S(\xi) + Q_w.
    \]
    Since the loss is locally strongly convex, that means that $-\nabla_{\Theta}^2 f(\Theta_*)$ is symmetric negative definite, then all its eigenvalues are strictly negative, ensuring that $\frac{d}{d\xi} S(\xi) \to 0$, which concludes the proof.
\end{proof}

In the leaky case, the following proposition holds.
\begin{prop}
    Around an equilibrium point $z^* = \left[ \begin{matrix} \Theta_* & I^* \end{matrix}\right]^{\top}$ of \eqref{eq:dynamical_system_resampling}-\eqref{eq:leaky_controller} and $\phi > 0, k_I = 1$ where $f + h(\Theta_*) \phi^{-1} h$ is locally strongly convex, then for $z = \left[ \begin{matrix} \Theta & I \end{matrix}\right]^{\top}$, we get:
   \[
        \E(z(\xi)) \xrightarrow[\xi \to \infty]{} z^*, \quad \E\left(\|z(\xi) - z^*\|^2\right) \xrightarrow[\xi \to \infty]{} \tr(S),
    \]
    where $S$ is the solution to $S J + J^{\top} S = -\diag(Q_w, Q_h)$ with
    \[
        J = \left[ \begin{matrix} - \nabla_{\Theta}^2 f(\Theta_*) - \frac{h(\Theta_*)}{\phi} \nabla_{\Theta}^2 h(\Theta_*) & g(\Theta_*) \\ g(\Theta_*)^{\top} & - \phi \end{matrix} \right].
    \]
\end{prop}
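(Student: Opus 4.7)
The plan is to mirror the linearization-plus-It\^o-calculus argument used in the proof of Proposition~\ref{prop:equilibrium}, adapted to the enlarged state $z = (\Theta^{\top}, I)^{\top}$ and the leaky dynamics ($\phi > 0$). I would start by setting $\tilde z = z - z^{*}$ and Taylor-expanding the closed-loop right-hand side of~\eqref{eq:dynamical_system_resampling}--\eqref{eq:leaky_controller} (with $u = k_I I = I$) around $z^{*}$. The equilibrium identities $f(\Theta_{*}) + g(\Theta_{*}) I^{*} = 0$ and $\phi I^{*} = h(\Theta_{*})$ cancel the constant terms; using $g = \nabla_{\Theta} h$ so that $\nabla_{\Theta} g(\Theta_{*}) = \nabla_{\Theta}^{2} h(\Theta_{*})$, the first-order coefficients assemble into the block matrix $J$ of the statement, while the stacked forcing $\eta = (w^{\top}, n)^{\top}$ has covariance $\diag(Q_w, Q_h)$. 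This gives the linearised SDE $\tfrac{d}{d\xi}\tilde z \simeq J \tilde z + \eta$.

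Next I would show that $J$ is Hurwitz. Writing $M := \nabla_{\Theta}^{2} f(\Theta_{*}) + \tfrac{h(\Theta_{*})}{\phi} \nabla_{\Theta}^{2} h(\Theta_{*})$, the hypothesis that $f + h(\Theta_{*}) \phi^{-1} h$ is locally strongly convex guarantees that $M$ is symmetric positive definite. The cross-coupling blocks of $J$ enter antisymmetrically, so they cancel in the symmetric part, leaving $(J + J^{\top})/2 = \diag(-M, -\phi)$, which is negative definite. Consequently the Lyapunov function $V(\tilde z) = \tilde z^{\top} \tilde z$ satisfies $\dot V = \tilde z^{\top}(J + J^{\top}) \tilde z < 0$ for $\tilde z \neq 0$, and $J$ is Hurwitz.

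I would then read off the asymptotics from standard linear-SDE theory. Taking expectations in the linearised equation yields $\tfrac{d}{d\xi} \E(\tilde z) = J \E(\tilde z)$, so $\E(\tilde z(\xi)) \to 0$ exponentially and $\E(z(\xi)) \to z^{*}$. Applying It\^o's lemma to $P(\xi) := \E(\tilde z \tilde z^{\top})$ gives $\dot P = J P + P J^{\top} + \diag(Q_w, Q_h)$; since $J$ is Hurwitz, $P(\xi)$ converges to the unique symmetric positive semidefinite $S$ solving $J S + S J^{\top} = -\diag(Q_w, Q_h)$, which, after transposition, is exactly the Lyapunov equation in the statement. Hence $\E\|\tilde z(\xi) - z^{*}\|^{2} = \tr(P(\xi)) \to \tr(S)$, as claimed.

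The main obstacle is the Hurwitz verification, because unlike the pure-integral case of Proposition~\ref{prop:equilibrium} the coupling $g(\Theta_{*})$ no longer vanishes at equilibrium (since $h(\Theta_{*})$ need not be zero when $\phi > 0$), so the $\Theta$- and $I$-subsystems are genuinely coupled and stability is not immediate from the diagonal blocks. The key observation is the antisymmetric sign structure of the off-diagonals, which isolates the strong-convexity gain in $\Theta$ and the leak $-\phi$ in $I$ inside the quadratic Lyapunov function; once this is in place, the remaining steps are a routine extension of the argument used for Proposition~\ref{prop:equilibrium}.
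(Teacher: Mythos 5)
Your proposal follows essentially the same route as the paper's proof: linearize the closed loop \eqref{eq:dynamical_system_resampling}--\eqref{eq:leaky_controller} around $z^*$, use $g=\nabla_\Theta h$ together with the equilibrium identities to assemble $J$, establish that $J$ is Hurwitz through its symmetric part, and read off the limits of the mean and covariance from the resulting linear SDE. In fact you supply the justification the paper leaves implicit: the second inequality in \eqref{eq:mu} holds precisely because the symmetric part of $J$ is block diagonal, which is exactly your antisymmetric-coupling observation.

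One caveat you should flag explicitly rather than absorb silently: your identity $(J+J^{\top})/2=\diag(-M,-\phi)$ requires the upper-right block of $J$ to be $-g(\Theta_*)$ --- which is indeed what differentiating $-g(\Theta)u$ with respect to $u=I$ yields --- whereas the proposition (and the paper's own linearization display) writes $+g(\Theta_*)$ there. With the $J$ as displayed, the off-diagonal blocks are \emph{symmetric}, the cancellation fails, and neither your argument nor the paper's bound \eqref{eq:mu} goes through; so your derivation is correct for the actual dynamics and implicitly corrects what appears to be a sign typo in the statement. A second, minor imprecision: the stationary covariance of $\tfrac{d}{d\xi}\tilde z=J\tilde z+\eta$ solves $JS+SJ^{\top}=-\diag(Q_w,Q_h)$, and transposing this equation returns the same equation (since $S$ is symmetric), not the form $SJ+J^{\top}S=-\diag(Q_w,Q_h)$ appearing in the statement; the two coincide only when $J$ is symmetric. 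The paper glosses over the same convention issue, but your ``after transposition'' remark does not actually bridge the gap.
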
 
\begin{proof}
    Linearizing the deterministic dynamical system around the equilibrium point $z^*$ leads to:
    \[
        \tfrac{d}{d\xi}z \simeq \left[ \begin{matrix} - \nabla_{\Theta}^2 f(\Theta_*) - \nabla_{\Theta} g(\Theta_*) I^* & g(\Theta_*) \\ \nabla_{\Theta} h(\Theta_*)^{\top} & - \phi \end{matrix} \right] z.
    \]
    Note that $g = \nabla_{\Theta} h$, therefore $\nabla_{\Theta} g(\Theta_*) = \nabla_{\Theta}^2 h(\Theta_*)$ and by using the equilibrium relationships in Proposition~\ref{prop:equilibrium}, we get $\tfrac{d}{d\xi}z \simeq Jz$. 
    
    The study of the stochastic differential equation follows the steps as previously. The critical part is about $J$ being Hurwitz. Denote by $\mu(A)$ the largest real part of the eigenvalues of $A \in \R^{n \times n}$. Then we get 
    \begin{multline}
        \label{eq:mu}
        \mu(J) \leq \mu\left( \frac{J + J^{\top}}{2} \right) \\
        \leq \max\left(\mu\left(- \nabla_{\Theta}^2 f(\Theta_*) - \frac{h(\Theta_*)}{\phi} \nabla_{\Theta}^2 h(\Theta_*) \right), -\phi\right).
    \end{multline}
    Since $h \geq 0$ and both $f + h(\Theta_*) \phi^{-1} h$ is strongly convex around $\Theta_*$, then $J$ is Hurwitz and the conclusion holds.
\end{proof}

\subsubsection*{Connection with accuracy and robustness}

The accuracy (as defined in Section~2) of the closed-loop with $\phi = 0$ is improved since there are fewer equilibrium points compared to the vanilla PINN ($u > 0$ is constant), but its robustness depends only on the curvature of $f$ and the covariances $Q_w$ and $Q_n$. 

A nonzero leak parameter $\phi>0$ has two distinct and antagonistic
effects on the dynamics. First, it modifies the equilibrium point
$z^*$ itself. Indeed, unless $h(\Theta_*) = 0$ one has $I^*\neq 0$ and the primal equilibrium $\Theta_*$ differs from the equilibrium of the pure-integral system. In other words, the introduction of the leak may introduce a \emph{steady bias} in the estimator, since the controller no longer enforces the constraint $h(\Theta)=0$ exactly at steady state. Therefore, its accuracy is decreased.

Second, increasing $\phi$ improves the robustness of the dynamics to
noise. At the linearized level, the spectral abscissa $\mu(J)$ (from equation~\eqref{eq:mu}) is nonincreasing in $\phi$ for low $\phi$ if both $f$ and $h$ are strongly convex around $\Theta_*$. It follows that increasing $\phi$ close to $0$
shifts all eigenvalues of $J$ strictly to the left, enhancing the local
damping of the dynamics. In turn, the steady-state covariance $S$ decreases, meaning that the variance of $\Theta$ decreases as the leak increases. Moreover, the leakage ensures that the control action remains positive and lower than it would be with the pure integrator. Consequently, $\tr(Q_w) = \tr(Q_f) + u \tr(Q_g)$ is lower asymptotically in the leaky case, decreasing even further $\tr(S)$. All these effects suggest that it is enlarging the region of attraction, ensuring convergence in more scenarios.

The leak parameter $\phi$, therefore, introduces a classical
bias--variance trade--off: small values of $\phi$ enforce the physics
constraint accurately but yield larger sensitivity to stochastic
perturbations, while larger $\phi$ improves robustness at the price of a nonzero steady bias. 

\begin{remark}
    Note also that in the mismatch case with a discrepancy between the data and the physics cost, there is no stable equilibrium for the pure integrator, whereas the leaky one still allows for this possibility.
\end{remark}

\section{Numerical Simulations}

In this section, we consider a toy example, and we estimate the solution to the following differential system:
\begin{equation}
    \begin{array}{ll}
        \dot{x}(t) = \left[ \begin{matrix} -5 & a \\ 0 & -2 \end{matrix} \right] x(t) + \left[ \begin{matrix} 1 \\ 2 \end{matrix} \right] \left( \sin(t) + \cos(t) \right), & t \in [0, 6], \\
        x(0) = \left[\begin{matrix} 1 & 0 \end{matrix}\right]^{\top}, & \Gamma_1 = \{0\}, \\
    \end{array}
\end{equation}
The parameter $a$ accounts for model mismatch. For data-generation, we consider $a=3$, and we measure $\tilde{x}(t) = x(t) + n(t)$ at some instant $t \in \Gamma_2 = \{ 0.5, 1, 1.5, \dots, 6\}$ with $n \sim \mathcal{N}(0, 0.01)$.

We ran the previous algorithms with the hyperparameters from the code available on GitHub\footnote{\texttt{https://github.com/mBarreau/Controlled-PINN}} and $k_I = 1$. We evaluated two cases: the \emph{match} case, where the physics known matches the data ($a = 3$), and the \emph{mismatch} case, where the supposed underlying model is slightly incorrect ($a = 2.7$). Since this is a toy example, we investigate whether the proposed control methodologies correctly handle these cases and compare them with the vanilla PINN. The results of $20$ trials with different seeds are presented in Figure~\ref{fig:analysis}. The reconstructed signals are denoted $\hat{x}$ and the normalized error is defined as $e = \sqrt{\int_0^6 \| x(t) - \hat{x}(t) \|^2 dt} \cdot \| x \|_{\Ltwo}^{-1}$.

\begin{figure}
    \centering
    \includegraphics[width=\linewidth]{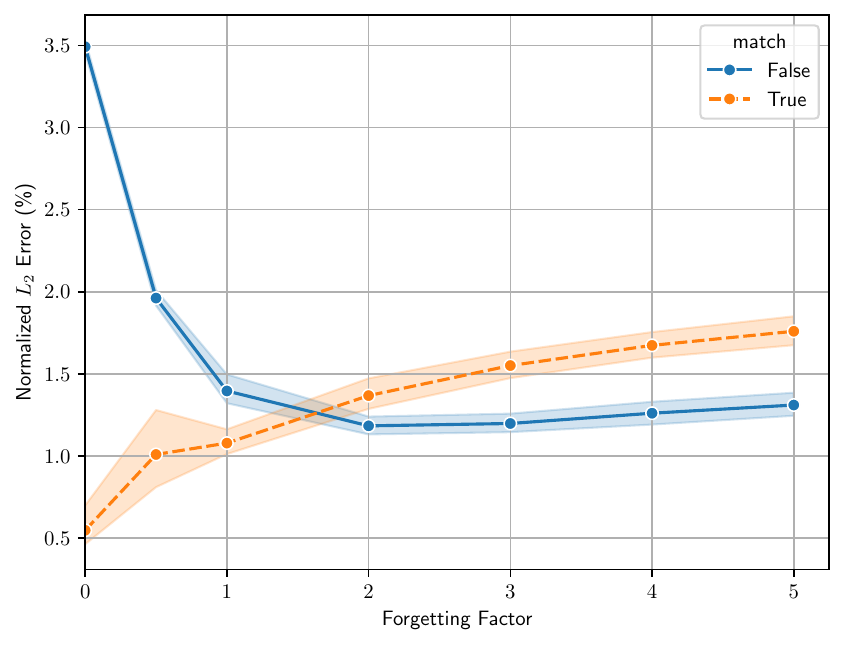}
    \caption{Accuracy of the reconstruction as a function of the forgetting factor in the match and mismatch cases.}
    \label{fig:analysis}
\end{figure}

For a zero forgetting factor, we apply the classic primal-dual methodology (pure integrator). In the matched case (dash-orange), we can see that the normalized error is very small, and the variance is also small ($0.55 \pm 0.28 \%$). With vanilla PINN and a constant $\lambda = 1$, we get a normalized error of $0.61 \pm 0.46 \%$, which is also good. Indeed, both of these results indicate a good fit and are likely close to the optimal one, which could be achieved by considering the approximation limitations of the finite-width neural network ($N < \infty$). We observe that the pure integrator training exhibits a lower variance than the vanilla PINN, which strengthens its robustness properties. However, in the mismatch case (solid blue), both methodologies fail to accurately estimate the real solution with $3.5 \pm 0.1\%$ normalized error for the integral control and $4.1 \pm 0.2\%$ for the vanilla PINN. Note, however, that the variance is still reduced by half when considering the integrator. This shows that the original method and the primal-dual one are comparable in terms of accuracy for that case; however, the controlled version exhibits a more robust behavior.

For the leaky integrator, in the nominal case with model match, the forgetting factor $\phi$ tends to deteriorate the accuracy, as evident from the increasing dash-orange curve. Unless around $\phi = 0.5$, we can see that the variance is constant. This confirms the local stability analysis results proposed previously. In the unmatched case, it is clear that the leaky integrator considerably improves performance. The normalized error decreases significantly, attaining its minimum around $\phi = 2$ with $1.2 \pm 0.13\%$. The forgetting factor provided guidance towards the physical model, but measurement fitting was more important in case of discrepancy, leading to a consistent estimation. This can be seen in Figure~\ref{fig:weights}, where the control action is similar for both controllers initially, but the leaky integrator reduces as the physics loss starts to be small, ensuring good fitting with the data, while the other one continues to increase.

\begin{figure}
    \centering
    \includegraphics[width=1\linewidth]{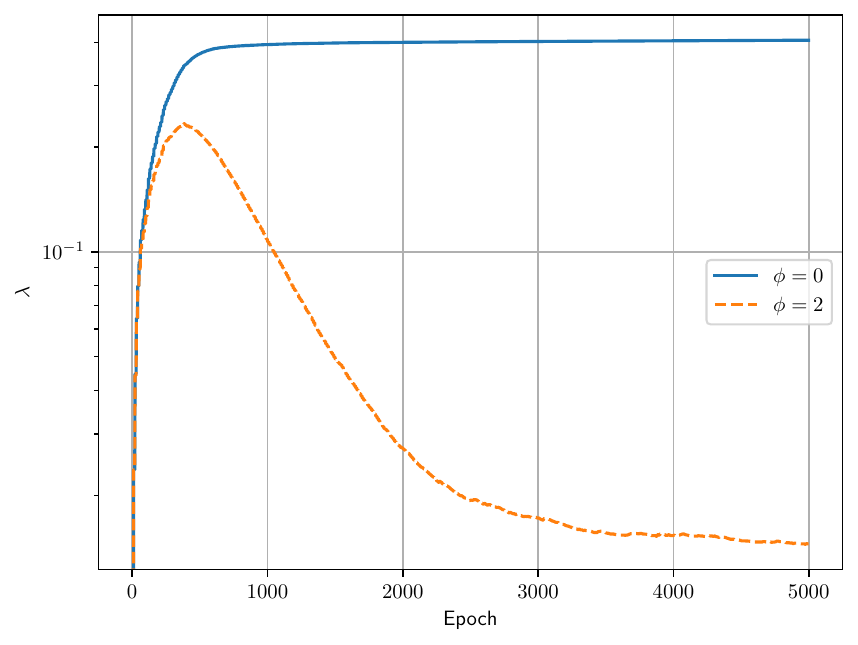}
    \caption{Evolution of the control signal with and without integral leakage.}
    \label{fig:weights}
\end{figure}

Overall, it appears that the integral controller indeed exhibits signs of improved robustness, which was the primary purpose of the method. On the other hand, the leaky integral introduces the classic bias-variance trade-off. A large forgetting factor does not guarantee the most accurate estimation; however, it produces very good results, even in cases of model mismatch. That means that the classical primal-dual method is appropriate for the \emph{forward} estimation task (i.e., the problem is well-posed and there is a unique solution to \eqref{eq:PDE}) while the leaky integral controller seems more appropriate for the \emph{inverse} problem (i.e., when we try to find an underlying model behind data) since it can handle model mismatch.

\section{Conclusion}

We introduced a control-theoretic viewpoint on the training of PINNs, interpreting the learning dynamics as a feedback interconnection between the data loss and the physics residual. This perspective reveals that the primal-dual formulation corresponds to an integral controller, while adding a forgetting factor yields a leaky-integral controller.

The stability analysis shows that these two controllers exhibit different behaviors. The pure integrator guarantees exact enforcement of the physics at equilibrium, but its lack of dissipation makes it sensitive to sampling noise. In contrast, the leaky controller introduces an equilibrium bias while increasing the damping in the closed-loop dynamics, resulting in a reduced steady-state variance. This provides a theoretical explanation for the accuracy--robustness trade-off observed
experimentally on a toy example and for the superior performance of the leaky controller in mismatched or noisy settings. The integral controller performs well in the matched case, indicating that it is well-suited for forward problems where the physical model is accurate and reliable. In contrast, the leaky integrator provides more consistent results under model mismatch, indicating its potential relevance for inverse problems. In both cases, the variance of the reconstruction error is reduced compared to standard PINNs.

This work represents an initial step toward more interpretable and reliable training algorithms for PINNs. Future research will focus on extending the analysis to more complex systems and interpreting other successful PINN strategies through the lens of control theory.

\bibliography{ifacconf}             

\end{document}